\ifcvprfinal\pagestyle{empty}\fi
\begin{document}

\title{A Minimal Solution for Two-view Focal-length Estimation using \\Two Affine Correspondences}

\newtheorem{mylem}{Lemma}

\author{Daniel Barath, Tekla Toth, and Levente Hajder\\
Machine Perception Research Laboratory\\
MTA SZTAKI, Budapest, Hungary\\
{\tt\small \{barath.daniel,hajder.levente\}@sztaki.mta.hu}}

\maketitle
\thispagestyle{empty}

\begin{abstract}
	A minimal solution using two affine correspondences is presented to estimate the common focal length and the fundamental matrix between two semi-calibrated cameras -- known intrinsic parameters except a common focal length. To the best of our knowledge, this problem is unsolved. The proposed approach extends point correspondence-based techniques with linear constraints derived from local affine transformations. The obtained multivariate polynomial system is efficiently solved by the hidden-variable technique. Observing the geometry of local affinities, we introduce novel conditions eliminating invalid roots. To select the best one out of the remaining candidates, a root selection technique is proposed outperforming the recent ones especially in case of high-level noise. The proposed 2-point algorithm is validated on both synthetic data and 104 publicly available real image pairs. A Matlab implementation of the proposed solution is included in the paper. 
\end{abstract}

\section{Introduction}

The recovery of camera parameters and scene structure have been studied for over two decades since several applications, such as 3D vision from multiple views~\cite{Hartley2003}, are heavily dependent on the quality of the camera calibration. In particular, two major calibration types can be considered: aiming at the determination of the intrinsic and/or extrinsic parameters. The former ones include focal lengths, principal point, aspect ratio, and non-perspective distortion parameters, while the extrinsic parameters are the relative pose.
%
%
Assuming two cameras with unknown extrinsic and \textit{a priori} intrinsic parameters except a common focal length is called the \textit{semi-calibrated case}~\cite{li2006simple}. It leads to the \textit{unknown focal-length problem}: estimation of the relative motion and common focal length, simultaneously.
The semi-calibrated case is realistic since (1) the aspect ratio is determined by the shape of the pixels on the sensors, it is usually 1:1; (2) the principal point is close to the center of the image, thus it is a reasonable approximation and (3) the distortion can be omitted if narrow field-of-view lenses are applied.
Considering solely the locations of point pairs makes the problem solvable using at least six point  pairs~\cite{li2006simple,stewenius2008minimal,torii2010six}. The objective of this paper is to \textit{solve the problem exploiting only two local affine transformations}.  

In general, 3D vision approaches~\cite{Hartley2003} including state-of-the-art structure-from-motion pipelines~\cite{Agarwal2011,BodisCVPR2014,Frahm2010,Moulon2013} apply a robust estimator, e.g.\ RANSAC~\cite{RANSAC}, augmented with a minimal method, such as the five~\cite{nister2004efficient} or six-point~\cite{li2006simple} algorithm as an engine. Selecting a method exploiting as few point pairs as possible gains accuracy and drastically reduces the processing time. Benefiting from estimators which use less input data, the understanding of low-textured environment becomes significantly easier~\cite{Raposo2016}. Moreover, minimal methods are advantageous from theoretical point-of-view leading to deeper understanding. 

Local affine transformations represent the warp between the infinitely close vicinities of corresponding point pairs~\cite{koser2009geometric} and have been investigated for a decade. Their application field includes homography~\cite{Barath2016novel} and surface normal~\cite{koser2009geometric,Barath2015} estimation; recovery of the epipoles~\cite{Bentolila2014}; triangulation of points in 3D~\cite{koser2009geometric}; camera pose estimation~\cite{Koser2008}; structure-from-motion~\cite{Raposo2016}. In practice, local affinities can be accurately retrieved~\cite{barath2016accurate,mikolajczyk2005comparison} using e.g.\ affine-covariant feature detectors, such Affine-SIFT~\cite{Morel2009} and Hessian-Affine~\cite{mikolajczyk2002affine}.
To the best of our knowledge, no paper has dealt with the unknown focal length problem using local affine transformations. 

This paper proposes two novel linear constraints describing the relationship between local affinities and epipolar geometry. Forming a multivariate polynomial system and solving it by the \textit{hidden-variable technique}~\cite{cox2006using}, the proposed method is efficient and estimates the focal length and the relative motion using only two affinities. In order to eliminate invalid roots, a novel condition is introduced investigating the geometry of local affinities. To select the best candidate out of the remaining ones, we propose a root selection technique which is as accurate as the state-of-the-art for small noise and outperforms it for high-level noise.   

\section{Preliminaries and Notation}

\noindent
\textbf{Epipolar geometry.} 
Assume two perspective cameras with a common intrinsic camera matrix $\mathbf{K}$ to be known. Fundamental and essential matrices~\cite{Hartley2003} are as follows:
\begin{equation*}
	\mathbf{F} = \begin{bmatrix}
    	f_1 & f_2 & f_3 \\
    	f_4 & f_5 & f_6 \\
    	f_7 & f_8 & f_9
   	\end{bmatrix}, \quad
	\mathbf{E} = \begin{bmatrix}
    	e_1 & e_2 & e_3 \\
    	e_4 & e_5 & e_6 \\
    	e_7 & e_8 & e_9
   	\end{bmatrix}.
\end{equation*}
If the cameras are calibrated ($\mathbf{K}$ is known) matrix $\mathbf{F}$ can be transformed to be an essential matrix $\mathbf{E}$ as follows:
\begin{equation}
	\mathbf{E} = \textbf{K}^{T} \mathbf{F} \mathbf{K}.
\end{equation}

The epipolar relationship of corresponding point pair $\mathbf{p}_1$ and $\mathbf{p}_2$ are described by $\mathbf{F}$ as
\begin{equation}
	\label{eq:epipolar_constraint}
	\mathbf{p}_2^T \mathbf{F} \mathbf{p}_1 = 0.
\end{equation}
A valid fundamental matrix must satisfy singularity constraint $\det(\mathbf{F}) = 0$.
Considering this cubic constraint and the fact that a fundamental matrix is defined up to an arbitrary scale, its degrees-of-freedom is reduced to seven. Thus seven point pairs are enough for the estimation. 

As the essential matrix encapsulates the full camera motion, the orientation and direction of the translation, it has five degrees-of-freedom. The two additional constraints are described by the well-known trace constraint~\cite{li2006simple} as 
\begin{equation}
	\label{eq:trace_constraint}
	2 \mathbf{E} \mathbf{E}^T \mathbf{E} - \textrm{tr}(\mathbf{E} \mathbf{E}^T) \textbf{E} = 0.
\end{equation}
Even though Eq.~\ref{eq:trace_constraint} yields nine polynomial equations for $\mathbf{E}$, only two of them are algebraically independent. 

Semi-calibrated case is assumed in this paper as only the common focal-length $f$ is considered to be unknown. Without loss of generality, the intrinsic camera matrix is
$
	\mathbf{K} = \mathbf{K}^T= \text{diag}(f, f, 1), 
$ 
where $f$ is the unknown focal-length. In order to replace $\mathbf{E}$ with $\mathbf{F}$ in Eq.~\ref{eq:trace_constraint} we define matrix $\mathbf{Q}$ as follows:
\begin{equation}
	\mathbf{Q} = \text{diag} \left( 1,1,\tau \right),
\quad    \tau = f^{-2}.
\end{equation} 
%


\noindent
Due to the fact that $\mathbf{K}$ is non-singular,  and trace($\mathbf{E} \mathbf{E}^T$) identifies a
scalar value,  Eq.~\ref{eq:trace_constraint} can be simplified by multiplying with $\mathbf{K}^{-T}$ and $\mathbf{K}^{-1}$ from the left and the right sides, respectively. Moreover, trace is invariant under cyclic permutations. As a consequence, Eq.~\ref{eq:trace_constraint} is written as~\cite{Kukelova2008BMVC,Pernek2013PRL}
\begin{equation}
	\label{eq:focal_trace_constraint}
	2 \mathbf{F} \mathbf{Q} \mathbf{F}^T \mathbf{Q} \mathbf{F} - \textrm{tr}(\mathbf{F} \mathbf{Q} \mathbf{F}^T \mathbf{Q}) \mathbf{F} = 0.
\end{equation}
This relationship will help us to recover the focal length and the fundamental matrix using two affine correspondences.

\noindent
\textbf{An affine correspondence} $(\mathbf{p}_1, \mathbf{p}_2, \mathbf{A})$ consists of a corresponding point pair and the related local affinity $\mathbf{A}$ transforming the vicinity of point $\mathbf{p}_1$ to that of $\mathbf{p}_2$. In the rest of the paper, $\mathbf{A}$ is considered as its left $2 \times 2$ submatrix
\begin{equation*}
	\mathbf{A} = \begin{bmatrix} a_1 & a_2 \\
    	a_3 & a_4
    \end{bmatrix}
\end{equation*}
since the third column -- the translation part -- is determined by the point locations. 

We use the \textbf{hidden variable technique} in the proposed method. It is a resultant technique in algebraic geometry for the elimination of variables from a multivariate polynomial system~\cite{cox2006using}. Suppose that $m$ polynomial equations in $n$ variables are given. In brief, one can assume an unknown variable as a parameter and rewrite the equation system as $\textbf{C}(y_1) \textbf{x} = 0$, where $\textbf{C}$ is a coefficient matrix depending on the unknown $y_1$ (hidden variable) and vector $\textbf{x}$ is the vector of $n - 1$ unknowns. If the number of equations equals to that of the unknown monomials in $\textbf{x}$, i.e. matrix $\textbf{C}$ is square, the non-trivial solution can be carried out as $\det(\textbf{C}(y_1)) = 0$. Solving the resultant equation for $y_1$ and back-substituting it, the whole system is solved.

\section{Focal-length using Two Correspondences}

This section aims the recovery of the unknown focal length and fundamental matrix using two affine correspondences. First, the connection between the fundamental matrix and local affinity is introduced, then we discuss the estimation technique.   \vspace{5px}

\noindent \vspace{2px}
\textbf{3.1. Exploiting a Local Affine Transformation}

Suppose that an affine correspondence $(\mathbf{p_1}, \mathbf{p_2}, \mathbf{A})$ and fundamental matrix $\mathbf{F}$ are known. It is trivial that every affine transformation preserves the direction of the lines going through points $\mathbf{p_1}$ and $\mathbf{p_2}$ on the first and second images. As a consequence, the link between directions $\mathbf{v_1}$ and $\mathbf{v_2}$ of epipolar lines can be described~\cite{barath2016accurate} by affine transformation $\mathbf{A}$ as
\begin{equation}
	\label{eq:a_v1_par_v2}
	\mathbf{A} \mathbf{v_1} \parallel \mathbf{v_2}.
\end{equation}




Reformulating Eq.~\ref{eq:a_v1_par_v2} using the well-known fact from Computer Graphics~\cite{turkowski1990transformations} leads to $\mathbf{A}^{-T} \mathbf{R}^{90} \mathbf{v_1} = \beta \mathbf{R}^{90} \mathbf{v_2}$, 
where matrix $\mathbf{R}^{90}$ is a 2D orthonormal (rotation) matrix rotating with $90$ degrees and $\beta$ is an unknown scale. Vectors $\mathbf{R}^{90} \mathbf{v_1}$ and $\mathbf{R}^{90} \mathbf{v_2}$ are the line normals $\mathbf{n_1}$ and $\mathbf{n_2}$ as 
\begin{equation}
	\label{eq:a_n1_eq_n2}
	\mathbf{A}^{-T} \mathbf{n_1} = \beta \mathbf{n_2}.
\end{equation}
In Appendix~\ref{appendix:proof_affine_constraints}, it is proven that $\beta$ is equal to $-1$ if $\mathbf{n_1}$ and $\mathbf{n_2}$ are calculated from the fundamental matrix using relationships $\mathbf{F} \mathbf{n_1}$ and $\mathbf{F}^T \mathbf{n_2}$ and they are \textit{not normalized}. In brief, it is given as the distance ratio of neighboring epipolar lines on the two images. For the case when the normals are not normalized -- the original scale has not been changed --, $\beta$ is only a scale inverting the directions. 

Normals are expressed from $\mathbf{F}$ as the first two coordinates of the epipolar lines: $\mathbf{n}_1 = (\mathbf{l}_1)_{(1:2)} = (\mathbf{F}^T \mathbf{p}_2)_{(1:2)}$ and $\mathbf{n}_2 = (\mathbf{l}_2)_{(1:2)} = (\mathbf{F} \mathbf{p}_1)_{(1:2)}$~\cite{Hartley2003}, where the lower indices select a subvector. Therefore, Eq.~\ref{eq:a_n1_eq_n2} is written as
\begin{equation}
	\mathbf{A}^{-T} (\mathbf{F}^T \mathbf{p}_2)_{(1:2)} = -(\mathbf{F} \mathbf{p}_1)_{(1:2)}
\end{equation}
and forms a system of linear equations consisting of two equations as follows:
\begin{eqnarray}
	\label{eq:affine_constraint_1}
	(u_2 + a_1 u_1) f_1 + a_1 v_1 f_2 + a_1 f_3 + (v_2 + a_3 u_1) f_4 + \nonumber \\ a_3 v_1 f_5 + a_3 f_6 + f_7 = 0 \\
	\label{eq:affine_constraint_2}
	a_2 u_1 f_1 + (u_2 + a_2 v_1) f_2 + a_2 f_3 + a_4 u_1 f_4 + \nonumber \\ (v_2 + a_4 v_1) f_5 + a_4 f_6 + f_8 = 0.
\end{eqnarray}
Thus each local affine transformation \textit{reduces the degrees-of-freedom by two}. 

\vspace{5px}

\noindent \vspace{2px}
\textbf{3.2. Two-point Solver}

Suppose that two affine correspondences ($\mathbf{p}_1^{1}$, $\mathbf{p}_2^{1}$, $\mathbf{A}^{1}$) and ($\mathbf{p}_1^{2}$, $\mathbf{p}_2^{2}$, $\mathbf{A}^{2}$) are given. Coefficient matrix 
\begin{equation*}
	\resizebox{1.0\columnwidth}{!}{
  	$\mathbf{C}^i = 
    \begin{bmatrix}
    	u_2 + a_1 u_1 & a_1 v_1 & a_1 & v_2 + a_3 u_1 & a_3 v_1 & a_3 & 1 & 0 & 0 \\
    	a_2 u_1 & u_2 + a_2 v_1 & a_2 & a_4 u_1 & v_2 + a_4 v_1 & a_4 & 0 & 1 & 0 \\
        u_1 u_2 & v_1 u_2 & u_2 & u_1 v_2 & v_1 v_2 & v_2 & u_1 & v_1 & 1
	\end{bmatrix}$
    }
\end{equation*}
related to the $i$-th ($i \in \{1,2\}$) correspondence is formed as the combination of Eqs.~\ref{eq:epipolar_constraint},~\ref{eq:affine_constraint_1},~\ref{eq:affine_constraint_2} and satisfies formula $\mathbf{C}^i \mathbf{x} = 0$, where $\mathbf{x} = \begin{bmatrix} f_1 & f_2 & f_3 & f_4 & f_5 & f_6 & f_7 & f_8 & f_9 \end{bmatrix}^T$ is the vector of unknown elements of the fundamental matrix. We denote the concatenated coefficient matrix of both correspondences as follows:
\begin{equation}
	\mathbf{C} = \begin{bmatrix}
    	\mathbf{C}^1 \\
        \mathbf{C}^2
    \end{bmatrix}.
\end{equation}
It is of size $6 \times 9$, therefore, its left null space is three-dimensional. The solution is carried out as
\begin{equation}
	\label{eq:null_space}
	\mathbf{x} = \alpha \mathbf{a} + \beta \mathbf{b} + \gamma \mathbf{c},  
\end{equation}
where $\mathbf{a}$, $\mathbf{b}$ and $\mathbf{c}$ are the singular vectors and $\alpha$, $\beta$, $\gamma$ are unknown non-zero scalar values. 

Remember that only the common focal length is unknown from the intrinsic parameters, therefore, we are able to exploit the trace constraint. Eq.~\ref{eq:focal_trace_constraint} yields ten cubic equations for four unknowns $\alpha$, $\beta$, $\gamma$ and $\tau$, where $\tau = f^{-2}$ encapsulates the unknown focal length.
We consider $\tau$ as the hidden variable and form coefficient matrix $\mathbf{C}(\tau)$ w.r.t.\ the other three ones -- thus the rows of $\mathbf{C}(\tau)$ are univariate polynomials with variable $\tau$. Even though $\alpha$, $\beta$ and $\gamma$ are defined up to a common scale, we do not fix this scale in order to keep the homogenity of the system. The monomials of this polynomial system are as 
$
	\mathbf{y} = [ \alpha^3 \;\; \alpha^2 \beta \;\; \alpha^2 \gamma  \;\; \alpha \beta^2 \;\; \alpha \beta \gamma \;\; \alpha \gamma^2 \;\; \beta^3 \;\; \beta^2 \gamma \;\; \beta \gamma^2 \;\; \gamma^3 ]^T
$.
Table~\ref{tab:coefficient_table} demonstrates the coefficient matrix. 

Since the scale of monomial vector $\mathbf{x}$ has not been fixed, the non-trivial solution of equation $\mathbf{C}(\tau) \mathbf{y} = 0$ is when the determinant vanishes as 
\begin{equation}
	\label{eq:determinant_hidden}
    \det(\mathbf{C}(\tau)) = 0.
\end{equation}
Therefore, the hidden-variable resultant -- a polynomial of the hidden variable -- is $\det(\mathbf{C}(\tau))$. As the current problem is fairly similar to that of~\cite{li2006simple}, we adopt the proposed algorithm. It is proved that $\det(\mathbf{C}(\tau))$ is actually a 15-th degree polynomial and it obtains the candidate values for $\tau$. Then the solution for $\alpha$, $\beta$, $\gamma$ and $\tau$ is given as $\mathbf{y} = \text{null}(\mathbf{C}(\tau))$. Finally, fundamental matrix $\mathbf{F}$ regarding to each obtained focal length can be directly estimated using Eq.~\ref{eq:null_space}. 

\begin{table}[h!]
  \centering
  \resizebox{\columnwidth}{!}{ \begin{tabular}{ | c | c c c c c c c c c c | }	
  \hline
        \multirow{2}{*}{$\mathbf{C}(\tau)$} & 1 & 2 & 3 & 4 & 5 & 6 & 7 & 8 & 9 & 10 \\ 
  		& $\alpha^3$ & $\alpha^2 \beta$ & $\alpha^2 \gamma$  & $\alpha \beta^2$ & $\alpha \beta \gamma$ & $\alpha \gamma^2$ & $\beta^3$ & $\beta^2 \gamma$ & $\beta \gamma^2$ & $\gamma^3$ \\
  \hline
        1 & $c_1$ & $c_2$ & $c_3$ & $c_4$ & $c_5$ & $c_6$ & $c_7$ & $c_8$ & $c_9$ & $c_{10}$ \\ 
        . & . & . & . & . & . & . & . & . & . & .\\ 
        10 & $c_{91}$ & $c_{92}$ & $c_{93}$ & $c_{94}$ & $c_{95}$ & $c_{96}$ & $c_{97}$ & $c_{98}$ & $c_{99}$ & $c_{100}$ \\ 
  \hline
  \end{tabular} }
  \caption{ The coefficient matrix $\mathbf{C}(\tau)$ related to the ten polynomial equations of the trace constraint.}
  \label{tab:coefficient_table}
\end{table}

\section{Elimination and Selection of Roots}

In this section, a novel technique is proposed to omit roots on the basis of the underlying geometry. Then we show a heuristics considering the properties of digital cameras to remove invalid focal lengths. In the end, we introduce a root selection algorithm. \vspace{5px}

\noindent \vspace{2px}
\textbf{4.1. Elimination of Invalid Focal Lengths}

A solution is proposed here based on the underlying geometry to eliminate invalid focal lengths. Suppose that a point pair $(\mathbf{p}_1, \mathbf{p}_2)$, the related local affinity $\mathbf{A}$, the fundamental matrix $\mathbf{F}$, and an obtained focal length $f$ are given. As the semi-calibrated case is assumed, $\mathbf{F}$ and $f$ exactly determines the projection matrices $\mathbf{P}_1$ and $\mathbf{P}_2$ of both cameras~\cite{Hartley2003}. Denote the 3D coordinates and the surface normal induced by point pair $(\mathbf{p}_1, \mathbf{p}_2)$, local affinity $\mathbf{A}$ and the projection matrices with $\mathbf{q} = [x \quad y \quad z]^T$ and $\mathbf{n} = [n_x \quad n_y \quad n_z]^T$, respectively. According to our experiences, linear triangulation~\cite{Hartley2003} is a suitable and efficient choice to estimate $\mathbf{q}$. Surface normal $\mathbf{n}$ is estimated exploiting affinity $\mathbf{A}$ by the method proposed in~\cite{Barath2015}.\footnote{\url{http://web.eee.sztaki.hu/~dbarath/}} 

Without loss of generality, we assume that a point of a 3D surface cannot be observed from behind. As a consequence, the angle between vectors $\mathbf{c}_{i} - \mathbf{q}$ and $\mathbf{n}$ must be smaller than $90^{\circ}$ for both cameras, where $\mathbf{c}_{i}$ is the position of the $i$-th camera ($i \in \{1,2\}$). This can be interpreted as follows: each camera selects a half unit-sphere around the observed point $\textbf{q}$. Surface normal $\textbf{n}$ must lie in the intersection of these half spheres. 
%
These half spheres are described by a rectangle in the spherical coordinate system as follows: 
$
	\textrm{rect}_{i} = \begin{bmatrix} \theta_{i} - \frac{\pi}{2} & \sigma_{i} - \frac{\pi}{4} & \pi & \frac{\pi}{2} \end{bmatrix}
$, 
 where $\theta_{i}$, $\sigma_{i}$ are the corresponding spherical coordinates and $\textrm{rect}_{i}$ is of format $\begin{bmatrix} \textrm{corner}_\theta & \textrm{corner}_\sigma & \textrm{width} & \textrm{height} \end{bmatrix}$. The intersection area induced by the two cameras is as 
\begin{equation*}
	\textrm{rect}_\cap = \bigcap_{i \in [1,2]} \textrm{rect}_{i}.
\end{equation*}

Point $\textbf{q}$ is observable from both cameras \textit{if and only if} surface normal $\textbf{n}$, represented by spherical coordinates $\Theta$ and $\Sigma$, lies in the intersection area: $\begin{bmatrix} \Theta & \Sigma \end{bmatrix} \in \textrm{rect}_\cap$. A setup, induced by focal length $f$, not satisfying this criteria is an invalid one and can be omitted. Note that this constraint can be straightforwardly extended to the multi-view case making the intersection area more restrictive.  \vspace{5px}

\noindent \vspace{2px}
\textbf{4.2. Physical Properties of Cameras}

We introduce restrictions on the estimated roots considering the physical limits of the cameras. The focal length within camera matrix $\mathbf{K}$ is not equivalent to the focal length of the lenses, since it is the ratio of the optical focal length and the pixel size~\cite{Hartley2003}. Particularly, the latter one is a few micrometers, while the optical focal length are within interval $[1 \dots 500]$ mm. Therefore, coarse lower and upper limits for a realistic camera are $100$ and $500.000$. Focal lengths out of this interval are automatically discarded. Note that these limits can be easily changed considering cameras with different properties. \vspace{5px}

\noindent \vspace{2px}
\textbf{4.3. Root Selection}

To resolve the ambiguity of multiple roots and to minimize the effect of the noise, the classical way is to exploit multiple measurements eliminating the inconsistent ones. Since Eq.~\ref{eq:determinant_hidden} is a high-degree polynomial it is sensitive to noise -- small changes in the coordinates and affine elements cause significantly different coefficients.

RANSAC~\cite{RANSAC} is a successful technique for that problem, e.g. in the five-point relative-orientation one~\cite{nister2004efficient}. Recent methods, i.e. Kernel Voting, exploit the property that the roots form a peak around the real solution~\cite{li2005non,li2006simple,kukelova2012algebraic}. Kernel Voting maximizes a kernel density function like a maximum-likelihood-decision-maker. To our experiences, this technique works accurately if the noise in the coordinates does not exceed $1-2$ pixels on average. Over that, the roots may form several strongly supported peaks and it is not guaranteed that the true solution is found. 

Thus we formulate the problem as a mode-seeking in a one dimensional domain:  the real focal length appears as the most supported mode. Among several mode-seeking techniques~\cite{jain1999data} the most robust one is the Median-Shift~\cite{shapira2009mode} according to extensive experimentation. Median-Shift providing Tukey-medians~\cite{tukey1975mathematics} as modes does not generate new elements in the domain it is applied to. In particular, there is no significant difference in the results of Tukey-~\cite{tukey1975mathematics} and Weiszfeld-medians~\cite{weiszfeld1937point}, however, the former one is slightly faster to compute. Finally, in order to overcome the discrete nature of Median-Shift -- since it does not add new instances, only operates with the given ones --, we apply a gradient descent from the retrieved mode $x_0$ maximizing function
\begin{equation}
	f(x) = \sum_{i = 1}^{n} \frac{\kappa(x_i - x)}{h},
\end{equation}
where $n$ is the number of focal lengths, $\kappa$ is a kernel function -- we chose Gaussian-kernel --, $x_i$ is the $i$-th focal length, and $h$ is a bandwidth same as for the Median-Shift. 

\section{Experimental Results}

For the synthesized tests, we used the MATLAB code shown in Alg.~\ref{MatlabAlgorithm}. For the real world tests, we used our C++ implementation\footnote{\url{http://web.eee.sztaki.hu/~dbarath/}} which is a modification of the solver of Hartley et al.~\cite{hartley2012efficient}.
\noindent \vspace{5px}


\noindent \vspace{2px}
\textbf{5.1. Synthesized tests}

For synthesized testing, two perspective cameras are generated by their projection matrices $\mathbf{P}_1$ and $\mathbf{P}_2$. The first camera is at position $[0 \; 0 \; 1]^T$ looking towards the origin, and the distance of the second one from the first is $0.15$ in a random direction. Five random planes passing over the origin are generated and each is sampled in fifty random locations. The obtained 3D points are projected onto the cameras. Zero-mean Gaussian-noise is added to the point coordinates.
The local affine transformations are calculated by derivating the homographies induced by the tangent planes at the noisy point correspondences similarly to~\cite{barathnovel}.



Figure~\ref{fig:root_number} reports the kernel density function with Gaussian-kernel width $10$ plotted as the function of the relative error (in percentage). Candidate focal lengths are estimated as follows:\vspace{3px}

\noindent \hspace{8px} \vspace{0.5px}
\textbf{1.} Select two affine correspondences.

\noindent \hspace{8px} \vspace{0.5px}
\textbf{2.} Apply the proposed 2-point method. 

\noindent \hspace{8px}
\textbf{3.} Repeat from Step 1.\vspace{3px}
    
The iteration limit is chosen to $100$. The blue horizontal line reports the result of Median-Shift, the green one is that of Kernel Voting. The $\sigma$ value of the zero-mean Gaussian-noise added to the point locations and affinities is (a) $0.01$ pixels, (b) $0.1$ pixels, (c) $1.0$ pixels, (d) $3.0$ pixels, (e) $3.0$ pixels and there are $10\%$ outliers, (f) $1.0$ pixels with some errors in the aspect ratio: the true one is $1.00$ but $0.95$ is used. The real focal length is $600$. 

Confirming the validity of the proposed theory, the peak is over the ground truth focal length: $0\%$ relative error. The proposed root selection is more robust than the Kernel Voting approach since the blue line is closer to the zero relative error even if the noise is high. 
 
\begin{figure*}[htbp]
    \centering
    \begin{subfigure}[b]{0.33\textwidth}
        	\includegraphics[width = 0.999\columnwidth]{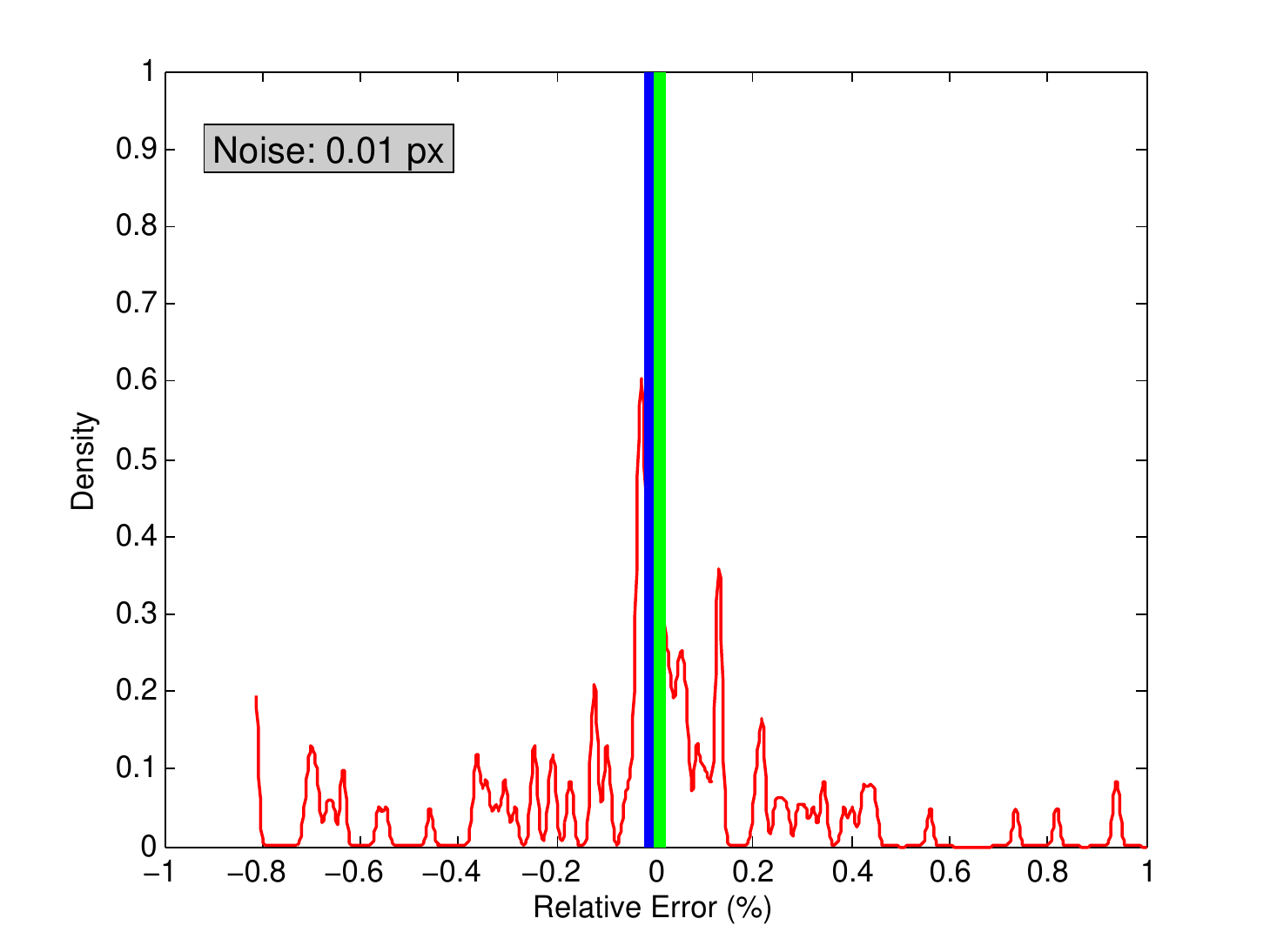}
           	\caption{}
    \end{subfigure} 
    \begin{subfigure}[b]{0.33\textwidth}
        	\includegraphics[width = 0.999\columnwidth]{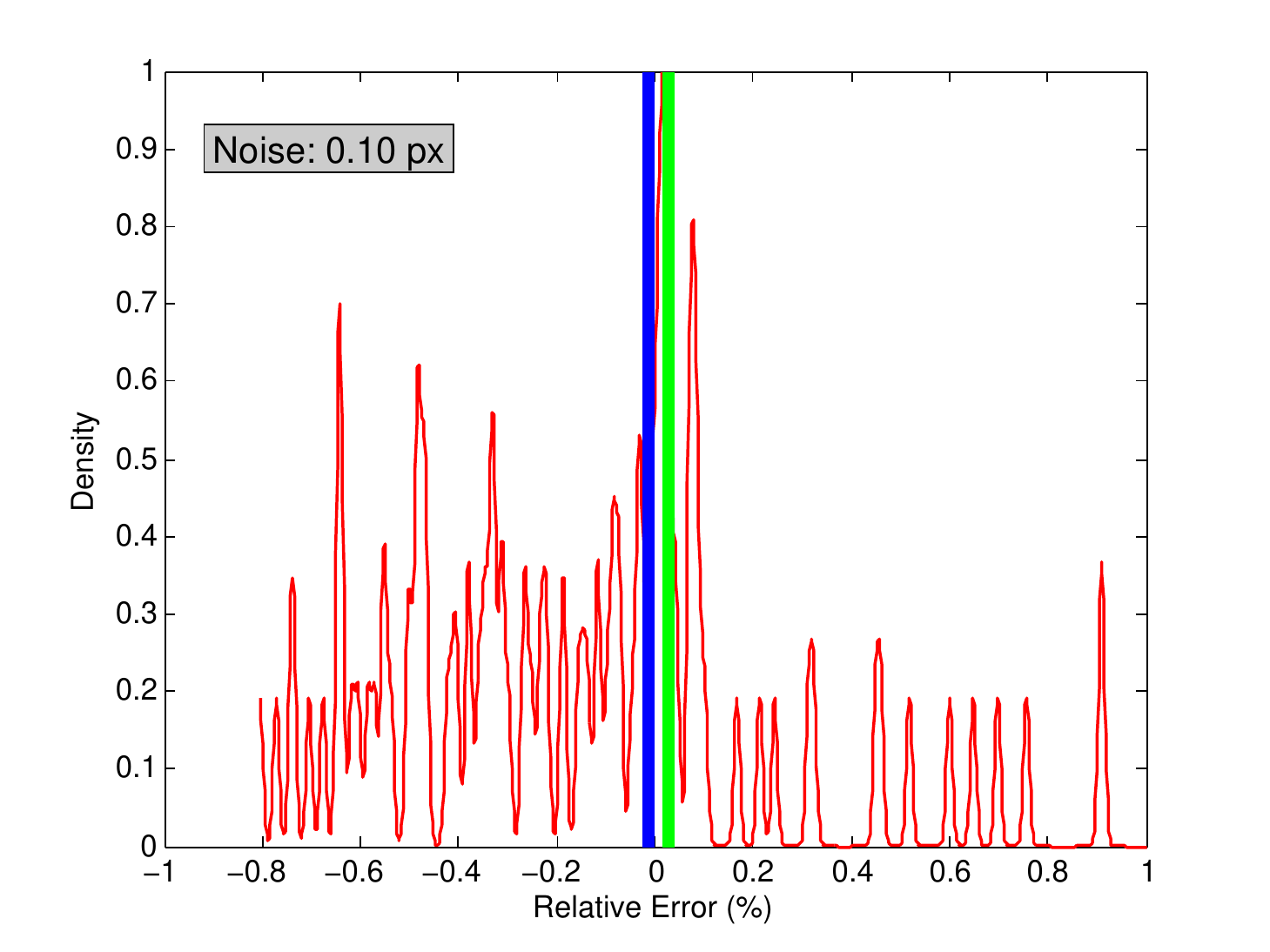}
           	\caption{}
    \end{subfigure} 
    \begin{subfigure}[b]{0.33\textwidth}
        	\includegraphics[width = 0.999\columnwidth]{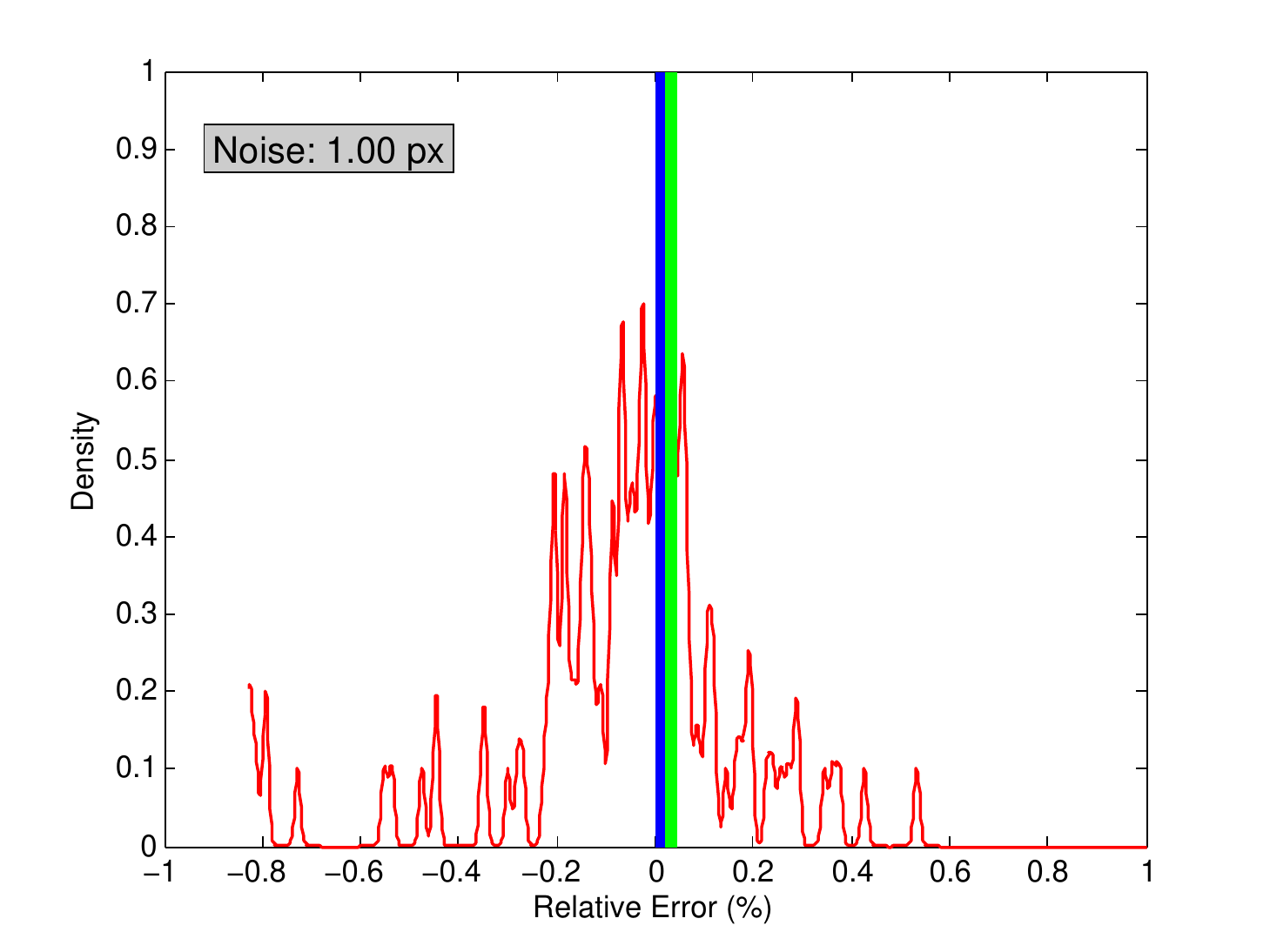}
           	\caption{}
    \end{subfigure}\\
    \begin{subfigure}[b]{0.33\textwidth}
        	\includegraphics[width = 0.999\columnwidth]{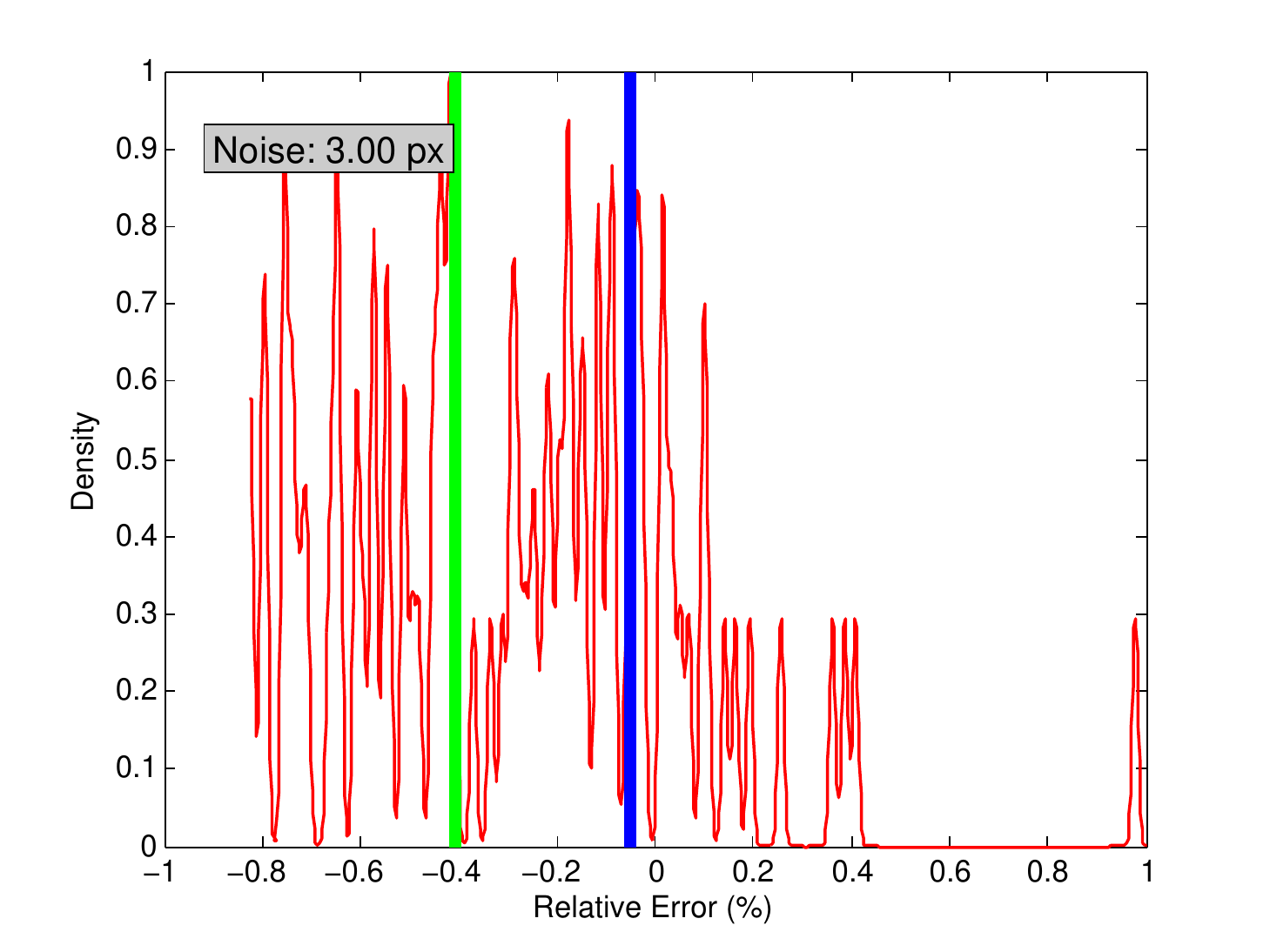}
           	\caption{}
    \end{subfigure}      
    \begin{subfigure}[b]{0.33\textwidth}
        	\includegraphics[width = 0.999\columnwidth]{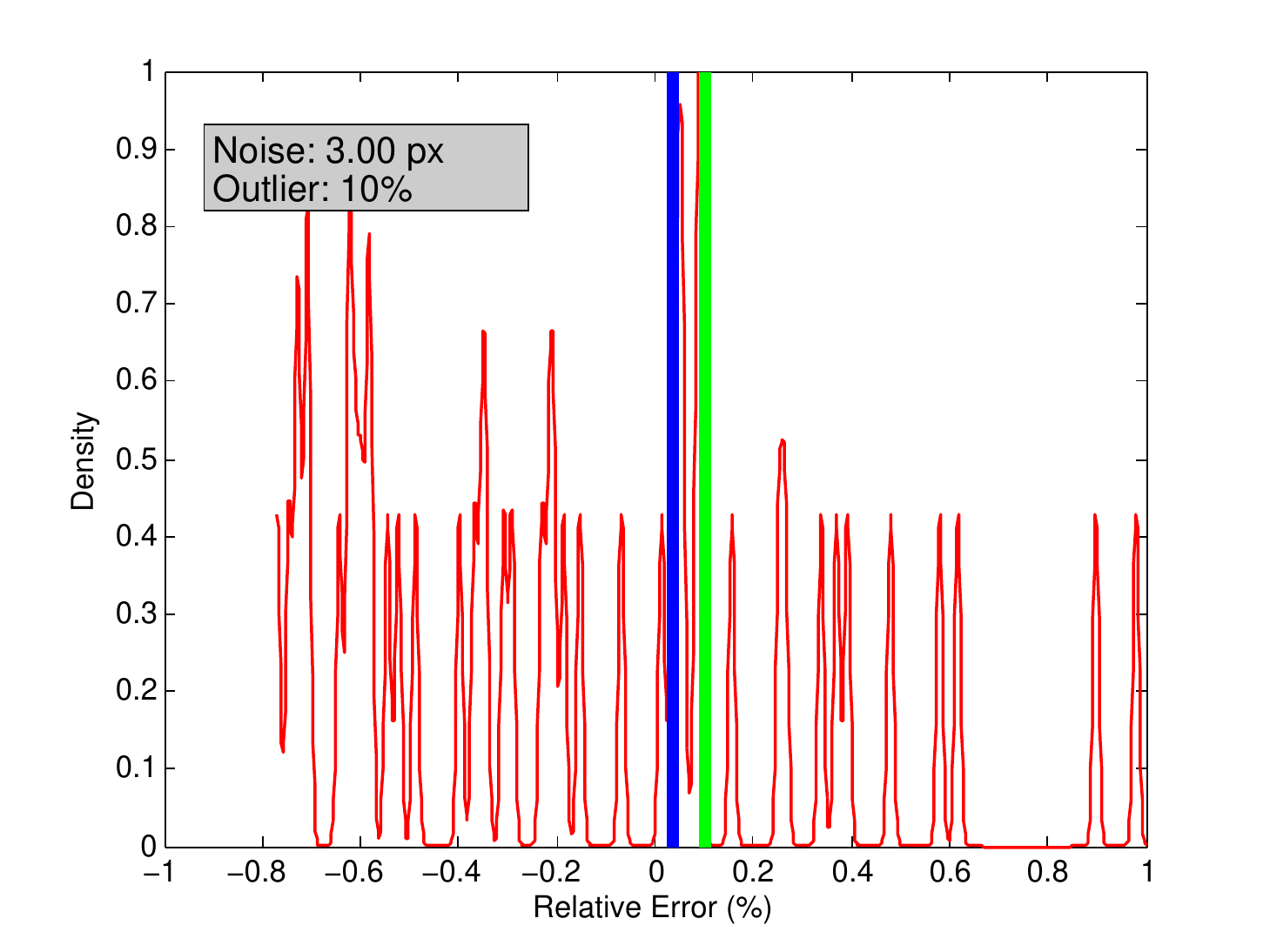}
           	\caption{}
    \end{subfigure}      
    \begin{subfigure}[b]{0.33\textwidth}
        	\includegraphics[width = 0.999\columnwidth]{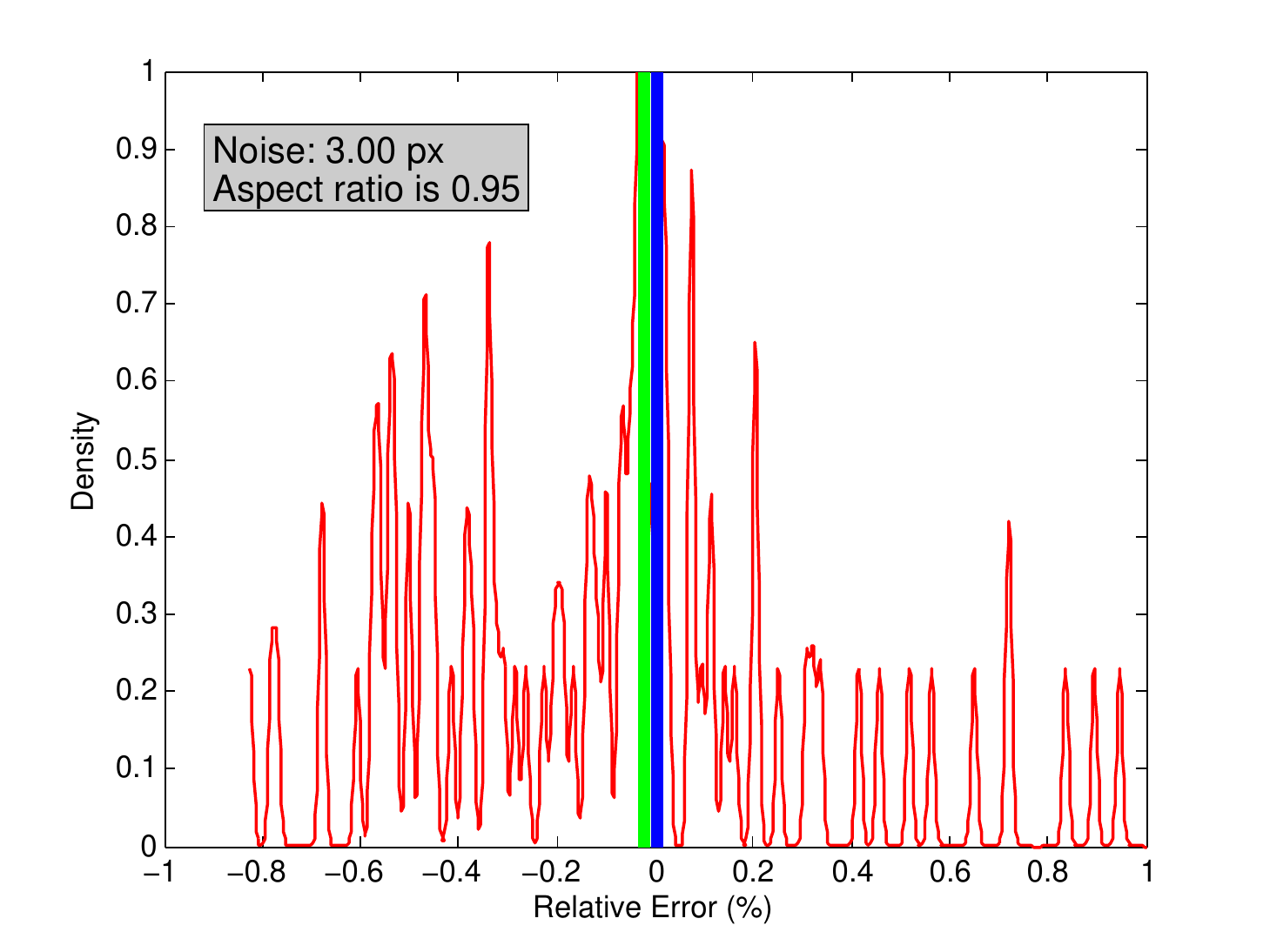}
           	\caption{}
    \end{subfigure}        
    \caption{ The kernel density function (vertical axis) with Gaussian-kernel width $10$ plotted as the function of the relative error ($\%$). Five planes are generated and each is sampled in $20$ locations -- points are projected onto the cameras and local affinities are calculated. The blue horizontal line is the result of Median-Shift, the green one is that of the Kernel Voting. The $\sigma$ value of the zero-mean Gaussian-noise added to the point locations and affinities is (a) $0.01$ pixels, (b) $0.1$ pixels, (c) $1.0$ pixels, (d) $3.0$ pixels, (e) $3.0$ pixels and there are $10\%$ outliers, (f) $1.0$ pixels with some errors in the aspect ratio: the true one is $1.00$ but $0.95$ is used. Ground truth focal length is $600$. Best viewed in color.}
    \label{fig:root_number}
\end{figure*} 
\vspace{5px}

Fig.~\ref{fig:fundamental_error} reports the mean (top) and median (bottom) errors of the estimated fundamental matrices plotted as the function of the noise $\sigma$ and compared with the results of Hartley et al.\cite{hartley2012efficient} and Perdoch et al.\cite{PerdochMC06}. The error is the Frobenious norm of the estimated and ground truth fundamental matrices. $100$ runs were performed on each noise level. It can be seen that the accuracy of the estimated fundamental matrices is similar to that of Hartley et al.~\cite{hartley2012efficient}. 
\noindent \vspace{5px}

\begin{figure}[htbp]
    \centering
     \includegraphics[width = 0.70\columnwidth]{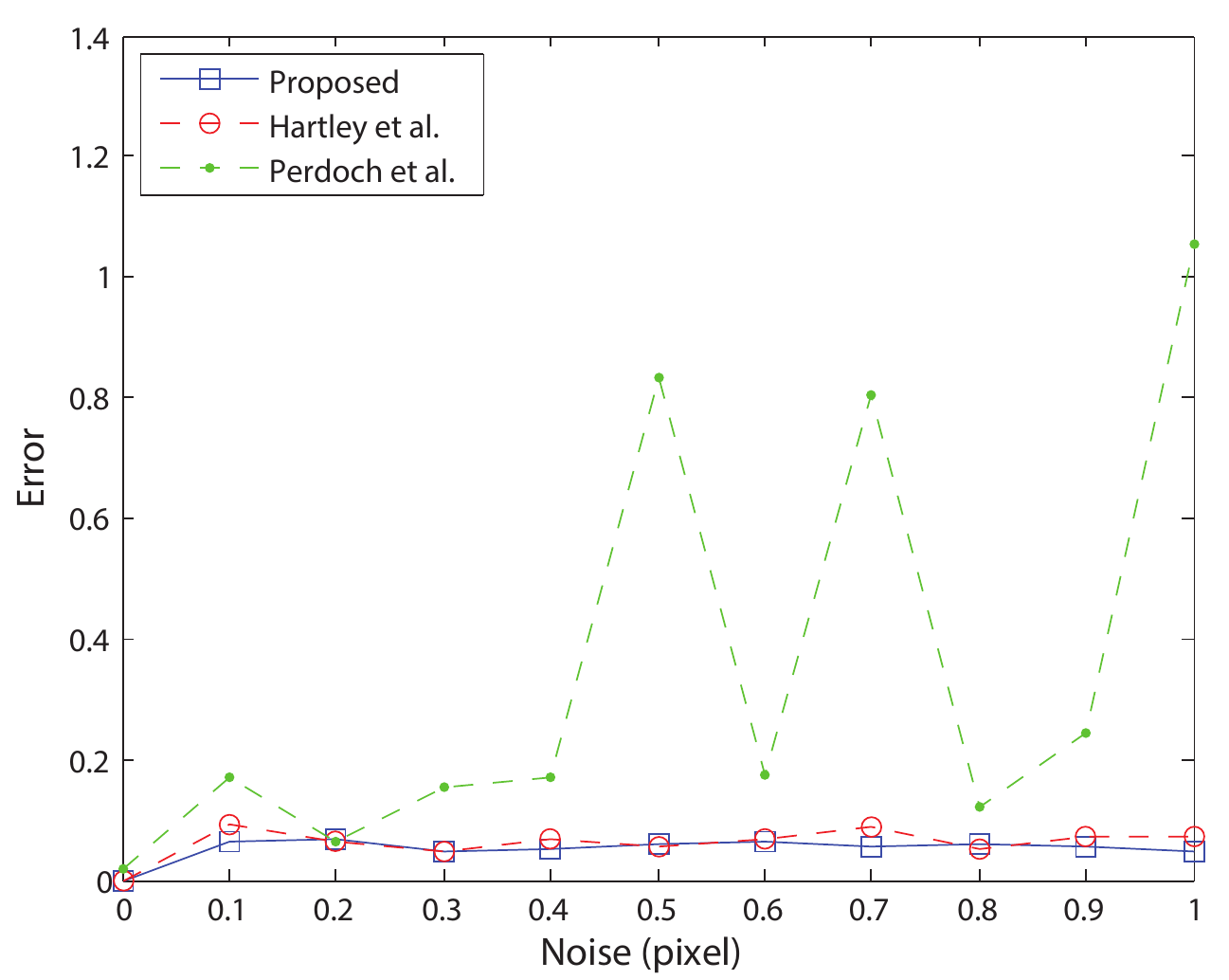}
     \includegraphics[width = 0.70\columnwidth]{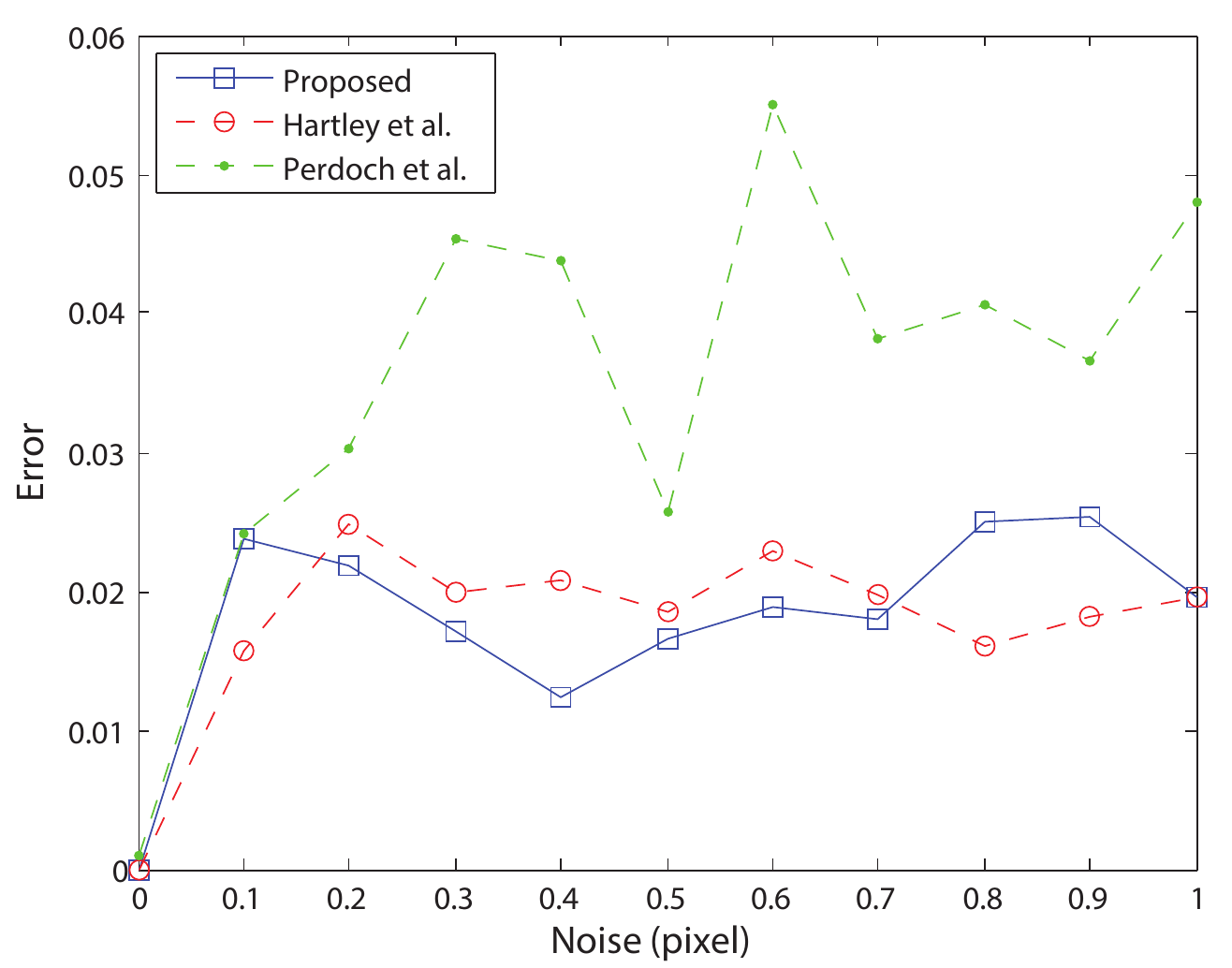}
    \caption{The mean (top) and median (bottom) Frobenious norms of the estimated and the ground truth fundamental matrices plotted as the function of the noise $\sigma$. 100 runs on each noise level were performed.}
    \label{fig:fundamental_error}
\end{figure} 
\vspace{5px}

\noindent \vspace{2px}
\textbf{5.2. Tests on Real Images\footnote{Test data are provided as supplemental material.}}

To test the proposed method on real world photos, $104$ image pairs were downloaded\footnote{\url{http://www2c.airnet.ne.jp/kawa/photo/ste-idxe.htm}} each containing the ground truth focal length in the EXIF data (see Fig.~\ref{fig:example_pairs} for examples). Affine correspondences are detected by ASIFT~\cite{Morel2009} and the same procedure is applied as for the synthesized tests. Fig.~\ref{fig:real_focals} reports the histogram of the relative errors (in percentage) in the focal length estimates on all the $104$ pairs. It can be seen that in most of the cases the obtained results are accurate, the relative error is close to zero. Fig.~\ref{fig:real_image} shows the first image of an example pair and the point correspondences. 

In Table~\ref{tab:comparison1}, the proposed method is compared with the 6-point algorithm~\cite{hartley2012efficient} and the one creating point correspondences from two local affinities~\cite{PerdochMC06}. The reported relative errors are computed as the ratio of the estimation error and the ground truth focal length as $|f_{est} - f_{gt}|/f_{gt}$. It can be seen that the 2-point technique outperforms the other ones in terms of both mean and median accuracy and spread.


\begin{table}
\caption{ Mean (Avg) and median (Med) relative error (in percentage) and the spread ($\sigma$) of the relative errors in the estimated focal lengths on the $104$ real image pairs. Corr \# denotes the required correspondence number.}
\center
  	\begin{tabular}{| l || c | c c c |}
    \hline
 	 	Method & Corr \# & Avg & Med & $\sigma$ \\ 
    \hline
 	 	\textbf{Proposed} & 2 & \phantom{x}\textbf{9.62} & \phantom{x}\textbf{3.88} & \textbf{14.08} \\
 	 	Perdoch et al.~\cite{PerdochMC06} & 2 & 44.66 & 45.89 & 26.43 \\
 	 	Hartley et al.~\cite{hartley2012efficient} & 6 & 21.79 & \phantom{x}8.61 & 27.48 \\
	\hline
\end{tabular}
\label{tab:comparison1}
\end{table}

\begin{figure}[htbp]
    \centering
    \begin{subfigure}[b]{0.50\columnwidth}
        	\includegraphics[width = 1.0\columnwidth]{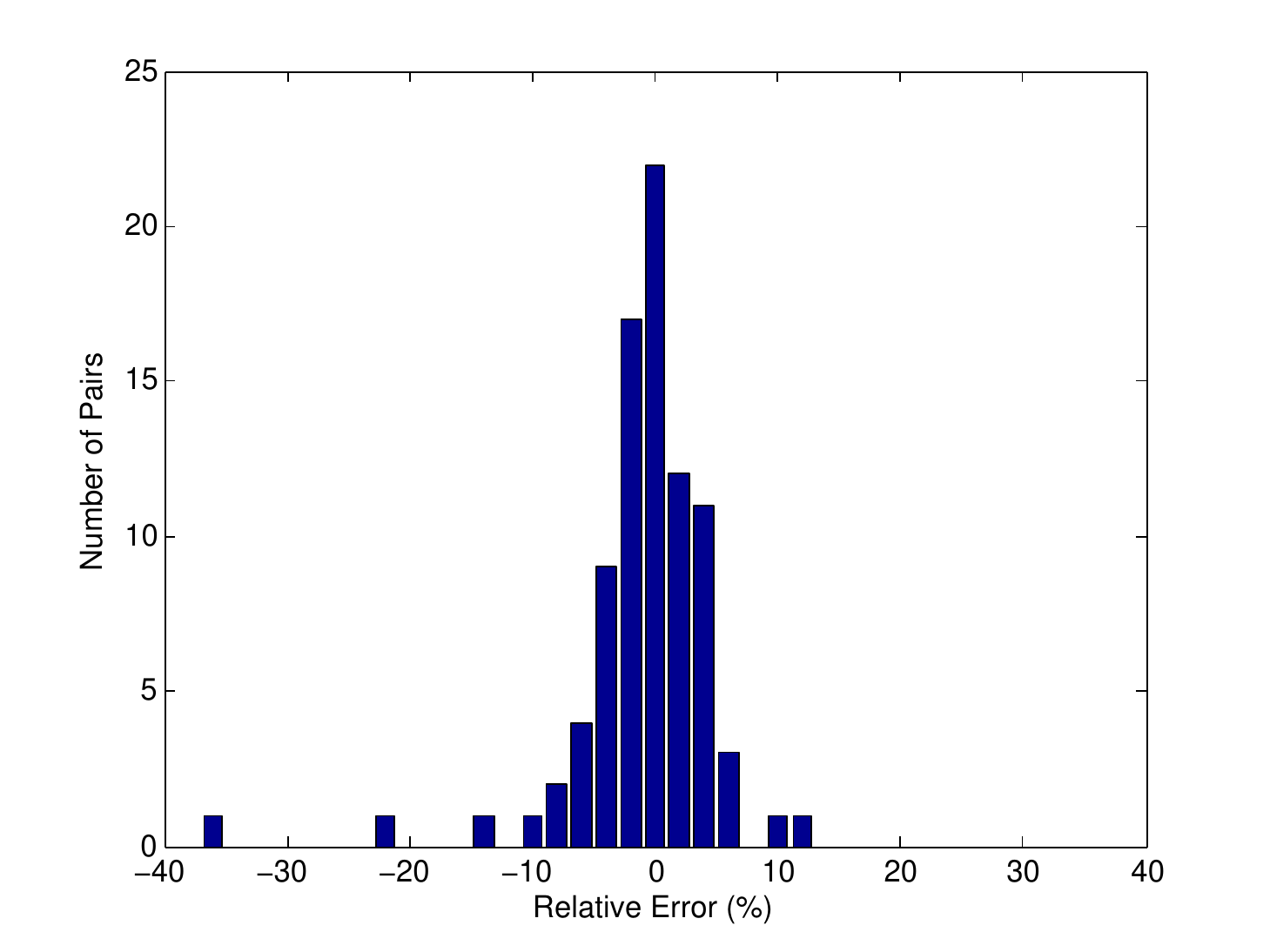}
    		\caption{}
      		\label{fig:real_focals}
    \end{subfigure}    
    \begin{subfigure}[b]{0.46\columnwidth}
        	\includegraphics[width = 1.0\columnwidth]{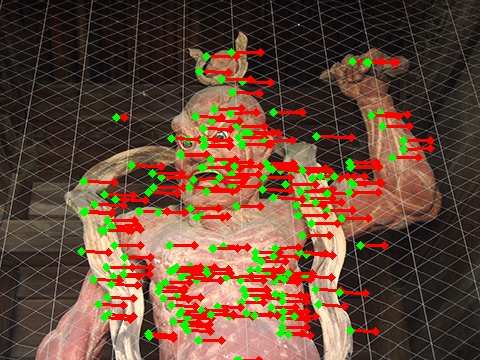}
    		\caption{}
      		\label{fig:real_image}
    \end{subfigure}        
    \caption{ (a) Histogram of focal length estimation on 104 image pairs. The horizontal axis is the number of the pairs plotted as the function of the relative error ($\%$, vertical axis) in the focal length. (b) The first image of an example pair. Point coordinates on the first image (green dots), on the second one (red dots) and the point movements (red lines). }
\end{figure} \vspace{5px}

\noindent \vspace{2px}
\textbf{5.3. Time Demand}

Augmenting RANSAC or other robust statistics with the proposed method significantly reduces the processing time. Table~\ref{tab:ransac_iterations} reports the required iteration number~\cite{Hartley2003} of RANSAC to converge using different minimal methods (columns) as engine. Rows show the ratio of the outliers. 

\begin{table}[h!]
  \centering
  \caption{Required iteration number of RANSAC augmented with minimal methods (columns) with $95\%$ probability on different outlier levels (rows).}
  \begin{tabular}{ | c | r r r r r r | }
  \hline
      & \multicolumn{5}{c}{\# of required points} & \\ 
    Outl. & \textbf{2} & 5\phantom{x} & 6\phantom{x} & 7\phantom{x} & 8\phantom{x} & \\ 
  \hline
    50\% & \textbf{11} & $95${\footnotesize\phantom{x}} & $191${\footnotesize\phantom{x}} & $383${\footnotesize\phantom{x}} & $766${\footnotesize\phantom{x}} & \\ 
    80\% & \textbf{74} & $\sim10^3$ & $\sim10^4$ & $\sim10^5$ & $\sim10^6$
 & \\
  \hline
  \end{tabular}
  \label{tab:ransac_iterations}
\end{table}

\begin{figure}[htbp]
    \centering
     \includegraphics[width = 0.49\columnwidth]{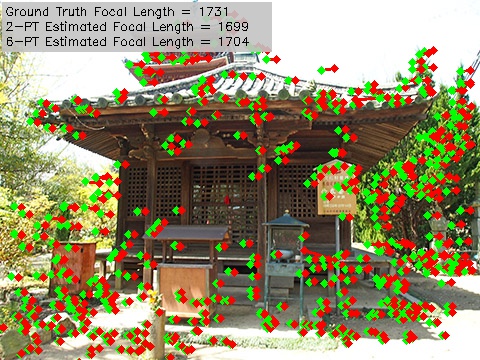}
     \includegraphics[width = 0.49\columnwidth]{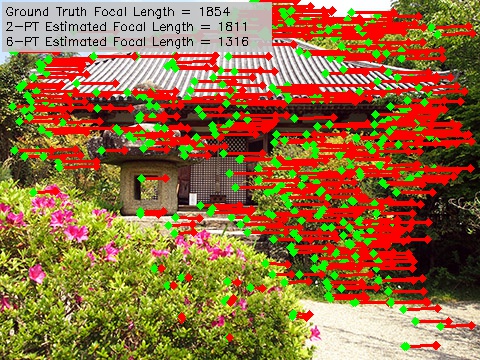}
     \includegraphics[width = 0.49\columnwidth]{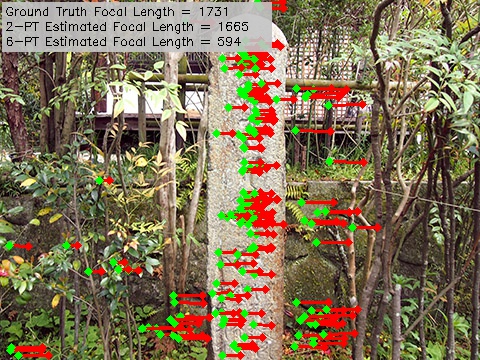}
     \includegraphics[width = 0.49\columnwidth]{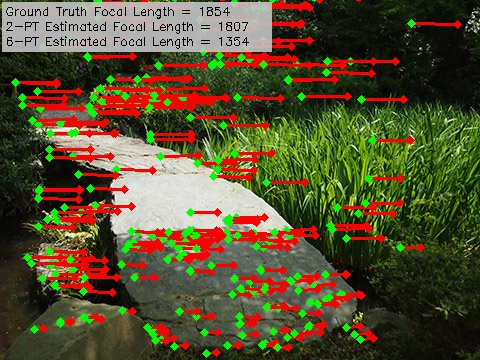}
     \includegraphics[width = 0.49\columnwidth]{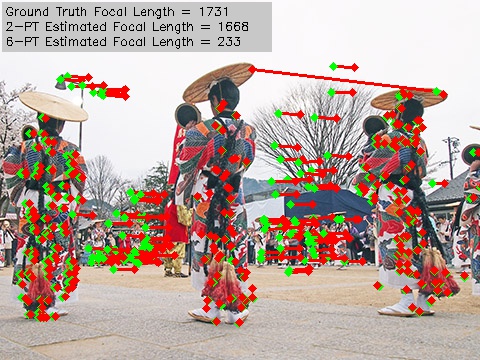}
     \includegraphics[width = 0.49\columnwidth]{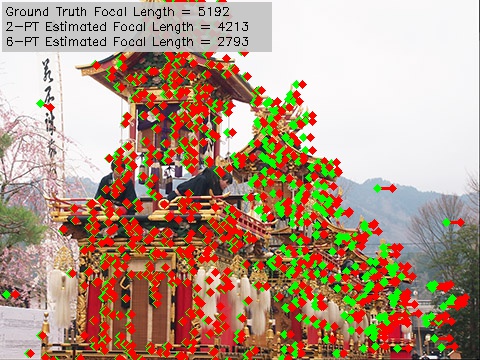}
    \caption{The first images of example pairs. Point coordinates on the first image (green dots), on the second one (red dots) and the point movements (red lines). The ground truth focal lengths, the results of the 6-point~\cite{hartley2012efficient} and the proposed methods are written in gray rectangle.}
    \label{fig:example_pairs}
\end{figure} 
\vspace{5px}

\section{Conclusion}

A theory and an efficient method is proposed to estimate the unknown focal-length and the fundamental matrix using only two affine correspondences. The 2-point method is validated on both synthesized and real world data. Compared with the state-of-the-art methods, it obtained the most accurate focal lengths with fundamental matrices having similar quality as the recent algorithms. Combining the minimal solver with a robust statistics, e.g.\ RANSAC, allows significant reduction in computation. Particularly, its time demand is around a few milliseconds, thus it is much faster than affine-covariant detectors providing the input.

The proposed algorithm can also be applied in reconstruction or multi-view pipelines, e.g.\ that of Bujnak et al.~\cite{bujnak2010robust}, if at least two images of the same camera with fixed focal length are available.


\appendix

\begin{figure}
	\centering
    
    \begin{subfigure}{.9\columnwidth}
  		\centering
		\includegraphics[width=1.0\columnwidth]{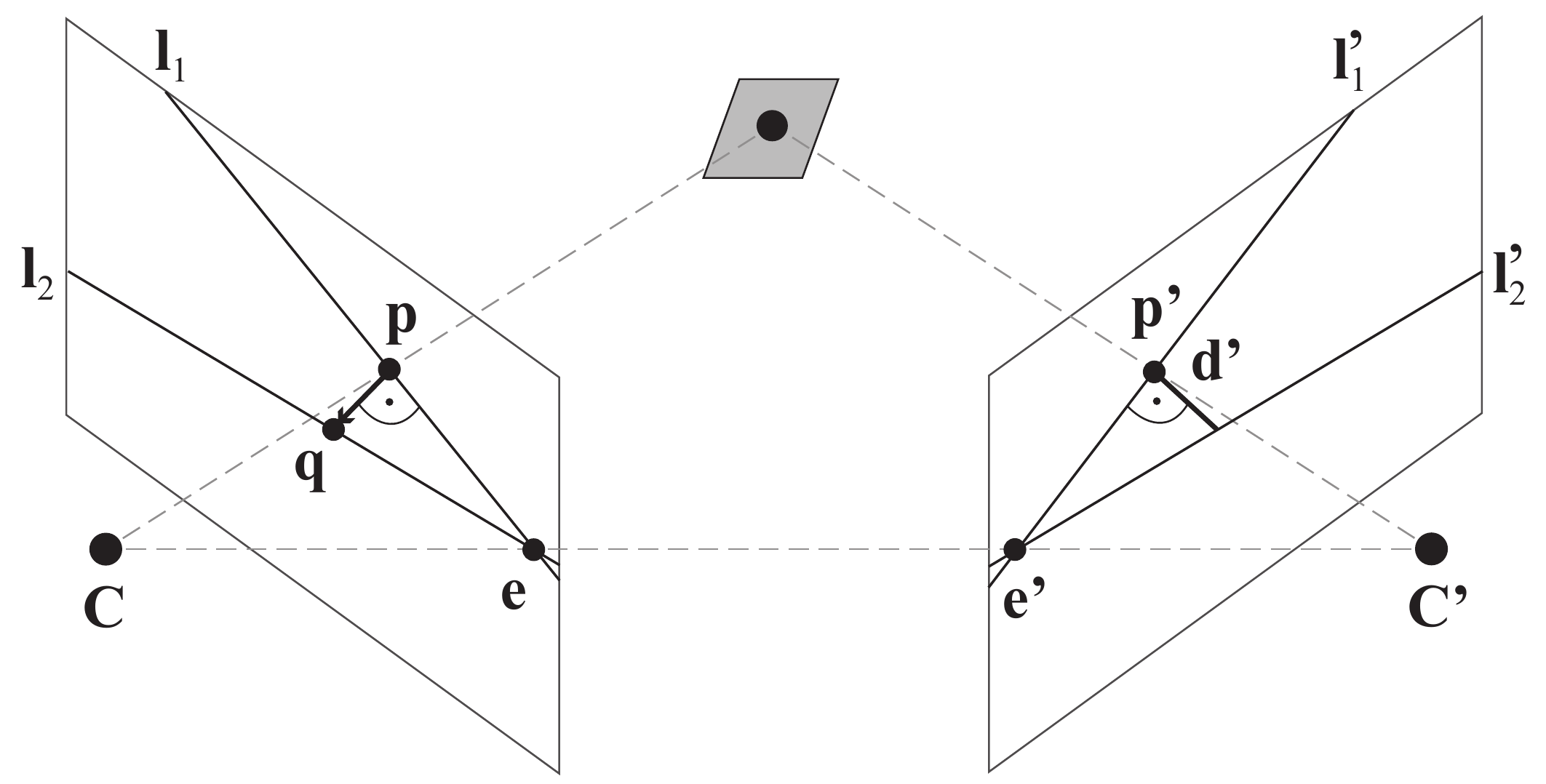}
  		\caption{The scale between neighboring epipolar lines.}
  		\label{fig:stereo_lines}
	\end{subfigure}
\caption{ Two projections of a patch. The constraint for scale states that the ratio of $|p - q|$ and $d'$ determines the scale between vectors $\mathbf{A}^{-T} \mathbf{n}$ and $\mathbf{n}'$. }
\label{fig:proof}
\end{figure}

\section{Proof of the Linear Affine Constraints}
\label{appendix:proof_affine_constraints}

\begin{mylem}[Constraints on the Normals of Epipolar Lines]
Given a local affine transformation $\mathbf{A}$ transforming the infinitely close vicinities of the related point pair. The normals of the corresponding epipolar lines are $\mathbf{n}_1$ and $ \mathbf{n}_2$. Matrix $\mathbf{A}$ is a valid local affinity if and only if $\mathbf{A}^{-T} \mathbf{n}_1 = -\mathbf{n}_2$. 
\end{mylem}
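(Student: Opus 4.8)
The plan is to pin down the scale $\beta$ in the already-established relation $\mathbf{A}^{-T}\mathbf{n}_1=\beta\,\mathbf{n}_2$ by differentiating the epipolar constraint, rather than by computing the distance ratio of neighbouring epipolar lines depicted in Figure~\ref{fig:proof}. A \emph{valid} local affinity is, by definition, the Jacobian at $\mathbf{p}_1$ of the true image-to-image correspondence map $\phi$ induced by the observed surface, with $\phi(\mathbf{p}_1)=\mathbf{p}_2$; every nearby corresponding pair $(\mathbf{p},\phi(\mathbf{p}))$ therefore satisfies $\phi(\mathbf{p})^T\mathbf{F}\mathbf{p}=0$. This identically vanishing function is the object I would exploit.

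For the forward direction I would set $g(\mathbf{p})=\phi(\mathbf{p})^T\mathbf{F}\mathbf{p}$ and take its derivative along an arbitrary inhomogeneous displacement $\delta\tilde{\mathbf{p}}_1$. Using $\delta\mathbf{p}_2=\mathbf{A}\,\delta\tilde{\mathbf{p}}_1$ (the linearisation of $\phi$) and padding displacements with a zero third coordinate, the product rule yields $(\mathbf{A}\,\delta\tilde{\mathbf{p}}_1)^T(\mathbf{F}\mathbf{p}_1)_{(1:2)} + (\mathbf{F}^T\mathbf{p}_2)_{(1:2)}^T\delta\tilde{\mathbf{p}}_1 = 0$. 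Recognising the two bracketed subvectors as exactly the \emph{un-normalised} normals $\mathbf{n}_2$ and $\mathbf{n}_1$ of the statement, this reads $\delta\tilde{\mathbf{p}}_1^T(\mathbf{A}^T\mathbf{n}_2+\mathbf{n}_1)=0$. Since validity supplies a genuine two-dimensional correspondence map, $\delta\tilde{\mathbf{p}}_1$ ranges over all directions, forcing $\mathbf{A}^T\mathbf{n}_2+\mathbf{n}_1=\mathbf{0}$, i.e. $\mathbf{A}^{-T}\mathbf{n}_1=-\mathbf{n}_2$. The decisive point is that the constant $-1$ falls out automatically: because the same $\mathbf{F}$ supplies both $\mathbf{n}_1$ and $\mathbf{n}_2$ and neither is rescaled, the two terms enter with matched magnitudes, so no distance-ratio computation is needed to certify $\beta=-1$.

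For the converse I would argue realisability. The equation $\mathbf{A}^T\mathbf{n}_2=-\mathbf{n}_1$ is two scalar conditions, cutting the four degrees of freedom of $\mathbf{A}$ down to two, which match precisely the two degrees of freedom of a local tangent-plane normal. Concretely I would realise any such $\mathbf{A}$ as the dehomogenised local Jacobian at $\mathbf{p}_1$ of a plane-induced homography compatible with $\mathbf{F}$, e.g. one of the form $[\mathbf{e}']_\times\mathbf{F}+\mathbf{e}'\mathbf{v}^T$ with $\mathbf{e}'$ the second epipole and $\mathbf{v}$ free: the constraint is exactly what guarantees such an $\mathbf{H}$ exists, and conversely any such $\mathbf{H}$ produces an $\mathbf{A}$ obeying it. A degree-of-freedom count — the homography family modulo overall scale carries two effective parameters in its Jacobian at a fixed point — shows the correspondence is onto, completing the equivalence.

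I expect the main obstacle to be the converse rather than the formula itself: the forward computation is essentially a one-line differentiation, but showing that \emph{every} $\mathbf{A}$ satisfying $\mathbf{A}^{-T}\mathbf{n}_1=-\mathbf{n}_2$ is realised by an admissible scene (hence "valid") requires the homography realisation and the parameter matching to be made precise, including that the dehomogenising Jacobian is non-degenerate at $\mathbf{p}_1$ and that the sign convention is preserved. A secondary subtlety I would guard against is the homogeneous-versus-inhomogeneous bookkeeping: the third components of both the displacements and the epipolar lines must be handled consistently, since it is exactly the truncation to the first two coordinates $(\cdot)_{(1:2)}$ that converts the epipolar lines into the normals appearing in the statement.
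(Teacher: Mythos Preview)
Your forward direction is correct and takes a genuinely different route from the paper. The paper fixes the scale $\beta$ by a geometric limiting argument: it perturbs $\mathbf{p}$ to $\mathbf{q}=\mathbf{p}+\delta\mathbf{n}_1$, writes down the distance $d'$ from $\mathbf{p}'$ to the induced epipolar line $\mathbf{F}\mathbf{q}$, computes $\beta^2=\lim_{\delta\to 0}\delta^2/d'^2$, obtains a closed form for $\beta$ in the unit-normal case, then redoes the bookkeeping for the un-normalised normals to collapse the expression to $\pm 1$, and finally selects the minus sign by an orientation remark. Your argument replaces all of this by a single differentiation of the identity $\phi(\mathbf{p})^T\mathbf{F}\mathbf{p}\equiv 0$; the product rule delivers $\mathbf{A}^T\mathbf{n}_2+\mathbf{n}_1=0$ directly, with the correct sign and no residual $\pm$ to resolve. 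What your approach buys is brevity and a sign that falls out automatically; what the paper's approach buys is a more explicit geometric picture (the ``ratio of neighbouring epipolar-line distances'' of Fig.~\ref{fig:proof}) and an intermediate formula for $\beta$ in the normalised case, which may be of independent interest.

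On the converse: the paper, despite stating an ``if and only if'', effectively proves only the forward implication, so your attempt via the homography family $[\mathbf{e}']_\times\mathbf{F}+\mathbf{e}'\mathbf{v}^T$ goes beyond what the paper supplies. The dimension count you give (two scalar constraints on four affine parameters, matched by the two free parameters in $\mathbf{v}$ modulo the component along the null direction) is the right heuristic. To make it rigorous you should verify that the map $\mathbf{v}\mapsto \mathbf{A}(\mathbf{v})$ (the dehomogenised Jacobian of $\mathbf{H}_{\mathbf{v}}$ at $\mathbf{p}_1$) is affine in $\mathbf{v}$ and that its image inside the two-dimensional affine subspace $\{\mathbf{A}:\mathbf{A}^T\mathbf{n}_2=-\mathbf{n}_1\}$ is all of it, i.e.\ that the linear part has rank two; this is where the non-degeneracy assumption on the dehomogenisation (third row of $\mathbf{H}_{\mathbf{v}}\mathbf{p}_1$ nonzero) and the genericity of $\mathbf{p}_1$ relative to the epipole enter. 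That is the only place your sketch still needs work.
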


\begin{proof}
It is trivial that affinity $\mathbf{A}$ transforms the direction of the corresponding epipolar lines to each other as $\mathbf{A} \mathbf{v} \parallel \mathbf{v}'$, where $\mathbf{v}$ and $\mathbf{v}'$ are the directions of the lines on the two images. It is well-known from Computer Graphics~\cite{turkowski1990transformations} that this is equivalent to $\mathbf{A}^{-T} \mathbf{n} = \beta \mathbf{n}'$, where $\mathbf{n} = (\mathbf{F}^T \mathbf{p}')_{1:2}$ and $\mathbf{n}' = (\mathbf{F} \mathbf{p})_{1:2}$ are the normals of the epipolar lines ($\beta \not= 0$). Note that lower index $(1:2)$ denotes the first two elements of a vector. We prove here that 
\begin{equation}
	\mathbf{A}^{-T} \mathbf{n} = -\mathbf{n}'.
	\label{eq:initial_norm}
\end{equation}

\noindent
\textbf{(Proof)} Given a corresponding point pair $\mathbf{p}=[x, y, 1]^T$ and $\mathbf{p}'=[x', y', 1]^T$. Let $\mathbf{n}_1 = [\mathbf{n}_{1,x} \quad \mathbf{n}_{1,y}]^T$ and $\mathbf{n}_1' = [\mathbf{n}_{1,x}' \quad \mathbf{n}_{1,y}']^T$ be the normal directions of epipolar lines $\mathbf{l}_1 = \mathbf{F}^T \mathbf{p}' = [\mathbf{l}_{1,a} \quad \mathbf{l}_{1,b} \quad \mathbf{l}_{1,c}]^T$ and $\mathbf{l}_1' = \mathbf{F} \mathbf{p} = [\mathbf{l}_{1,a}' \quad \mathbf{l}_{1,b}' \quad \mathbf{l}_{1,c}']^T$. Then it is trivial that $\mathbf{A}^{-T} \mathbf{n}_1 = \beta \mathbf{n}_1'$ due to $\mathbf{A} \mathbf{v} \parallel \mathbf{v}'$, where $\beta$ is a scale factor.

First, the task is to determine how affinity $\mathbf{A}$ transforms the length of $\mathbf{n}_1$ if $| \mathbf{n}_1 | = | \mathbf{n}_1' | = 1$. 
Introduce point $\mathbf{q} = \mathbf{p} + \delta \mathbf{n}_1$, where $\delta$ is an arbitrary scalar value. This new point determines an epipolar line on the second image as $\mathbf{l}_2'  = \mathbf{F} \mathbf{q} = \mathbf{F} (\mathbf{p} + \delta \mathbf{n}_1) = [\mathbf{l}_{2,a}' \quad \mathbf{l}_{2,b}' \quad \mathbf{l}_{2,c}']^T$.
Scale $\beta$ is given by distance $d'$ between line $\mathbf{l}_2'$ and point $\mathbf{p}'$ (see Fig.~\ref{fig:stereo_lines}). The calculation of distance $d'$ is written as follows:
\begin{eqnarray}
		\label{eq:dist_2}
		d' &= \frac{|s_{1,a} x' + s_{2,b} y' + s_{3,c} |}{ \sqrt{s_{1,a}^2 + s_{2,b}^2} }, \\
		s_{i,k} &= \mathbf{l}_{1,k}' + \delta f_{i1} \mathbf{n}_{1,x} + \delta f_{i2} \mathbf{n}_{1,y}, \nonumber \\
        i &\in \{1,2,3\}, k \in \{a,b,c\}\nonumber 
\end{eqnarray}
Point $\mathbf{p}'$ lies on $\mathbf{l}_1'$, which can
be written as $\mathbf{l}_{1,a}' x' + \mathbf{l}_{1,b}' y' + \mathbf{l}_{1,c}' = 0$. This fact reduces Eq.~\ref{eq:dist_2} to 
\begin{eqnarray}
		\label{eq:dist_3}
		d' = \frac{| \hat{s}_1 u' + \hat{s}_2 v^2 + \hat{s}_3|}{ \sqrt{s_1^2 + s_2^2} },
\end{eqnarray}
where $\hat{s}_{i} = \delta f_{i1} \mathbf{n}_{1,x} + \delta f_{i2} \mathbf{n}_{1,y}, \quad i \in \{1,2,3\}$.
%
To determine $\beta$, the introduced point $\mathbf{q}$ has to be moved infinitely close to $\mathbf{p}$ ($\delta \to 0$). The square of $\beta$ is then written as $
%
		\label{eq:dist_4}
		\beta^2 = \lim_{\delta \to 0} \frac{\delta^2}{d'^2} = \lim_{\delta \to 0} \frac{s_1^2 + s_2^2}{ | \hat{s}_1 u' + \hat{s}_2 v' + \hat{s_3}|^2 }.
$
%
After elementary modifications, the formula for scale $\beta$ is 
$
	\beta = \sqrt{\mathbf{l}_{1,a}' \mathbf{l}_{1,a}' + \mathbf{l}_{1,b}' \mathbf{l}_{1,b}'} / \left( \left| \widetilde{s_1} x' + \widetilde{s_2} y' + \widetilde{s_3} \right| \right),
$
where 
$
	\widetilde{s}_i = f_{i1} \mathbf{n}_{1,x} + f_{i2} \mathbf{n}_{1,y}, \; i \in \{1,2,3\}
$. 
Therefore, \textit{we can calculate $\beta$ for unit length normals}. 

Consider the case when normals are kept in their original form and not normalized ($|\mathbf{n}_1| \neq |\mathbf{n}_1'| \neq 1$). The normalization indicates the following formula
\begin{equation}
	\label{eq:normalization_proof_1}
	\mathbf{A}^{-T} \frac{\mathbf{n}}{| \mathbf{n} |} = \beta \mathbf{n}'.
\end{equation}
The epipolar line corresponding to point $\mathbf{p}$ is parameterized as $[\mathbf{l}_{1,a}', \mathbf{l}_{1,b}', \mathbf{l}_{1,c}'] = \mathbf{F} [x, y, 1]^T$. Therefore, its normal is as follows: 
$
	\mathbf{n}' = \begin{bmatrix}
		\mathbf{l}_{1,a}' &
		\mathbf{l}_{1,b}'
	\end{bmatrix}^T = (\mathbf{F} \begin{bmatrix}
		x' &
		y' &
		1
	\end{bmatrix}^T)_{(1:2)}.
$
Similarly,
$
	\mathbf{n} = (\mathbf{F}^T \begin{bmatrix}
		x' &
		y' &
		1
	\end{bmatrix}^T)_{(1:2)}.
$
The denominator in Eq.~\ref{eq:normalization_proof_1} for computing $\beta$ is rewritten as
$
	| \mathbf{n} | = \sqrt{\mathbf{l}_{1,a}^2 + \mathbf{l}_{1,b}^2}
$.
The numerator is as follows:
\begin{eqnarray*}
	\widetilde{s_1} u' + \widetilde{s_2} v' + \widetilde{s_3} = \\
	\mathbf{n}_{1,u} (f_{11} u' + f_{21} v' + f_{31}) + \mathbf{n}_{1,v} (f_{12} u' + f_{22} v' + f_{32}) = \\
	 \mathbf{n}_{1,u}^2 + \mathbf{n}_{1,v}^2 = |\mathbf{n}_1|^2.
\end{eqnarray*}
Thus
$\beta = \pm |\mathbf{n}_{1}| /  |\mathbf{n}_{1}|^2 = \pm 1 / |\mathbf{n}_{1}|$.
Therefore, Eq.~\ref{eq:normalization_proof_1} is modified to 
$
	\mathbf{A}^{-T} \mathbf{n} = \pm \mathbf{n}'
$.

Since the direction of the epipolar lines on the two images must be the opposite of each other, the positive solution is omitted. The final formula is: $\mathbf{A}^{-T} \mathbf{n} = -\mathbf{n}'$.
\end{proof}

\begin{figure*}
\begin{minipage}{\linewidth}
\lstset{language=Matlab, basicstyle=\footnotesize, frame=single }
\begin{lstlisting}[caption=Two-point Algorithm, title=Program 1: The Two-point Algorithm, label=MatlabAlgorithm]
%%%%%%%%%%%%%%%%%%%%%%%%%%%%%%%%%%%%%%%%%%%%%%%%%%%%%%%%%%%%%%%%%%%%%%%%%%%%%%%%%%%%%%%%%%%%%%%%%%%%
%% 2-pt focal length algorithm. Use Matlab-7.0(6.5) with SymbolicMath Toolbox. 	
%% Input: The "Matches" is a 2x8 matrix containing two affine correspondences.
%% 	Each row of "Matches": (u1, v1, u2, v2, a1, a2, a3, a4).
%% 	Example (the ground truth focal length is 600):
%% Matches = [12.0527 134.0870 -263.1743 679.7212 1.6376 -0.3952 -0.1925 2.2532;
%%	-67.9281 -42.4639 -313.5657 362.3455 1.3758 -0.3845 0.0150 1.4806]
%% Output: focal lengths.
%%%%%%%%%%%%%%%%%%%%%%%%%%%%%%%%%%%%%%%%%%%%%%%%%%%%%%%%%%%%%%%%%%%%%%%%%%%%%%%%%%%%%%%%%%%%%%%%%%%%
function F = TwoPointFocalLength(Matches)
    syms F f x y z w equ Res Q C
    equ = sym('equ', [1 10]); 
    C = sym('C', [10 10]);
    Q = w^(-1) * [1, 0, 0; 0, 1, 0; 0, 0, w]; 
    M = zeros(size(pts1,1), 9); 
    
    for i = 1 : size(pts1,1)
        u1 = Matches(i,1); v1 = Matches(i,2); u2 = Matches(i,3); v2 = Matches(i,4);
        a1 = Matches(i,5); a2 = Matches(i,6); a3 = Matches(i,7); a4 = Matches(i,8);
        
        M(3*i + 0,:) = [u1 * u2, v1 * u2, u2, u1 * v2, v1 * v2, v2, u1, v1, 1];
        M(3*i + 1,:) = [u2 + a1 * u1, a1 * v1, a1, v2 + a3 * u1, a3 * v1, a3, 1, 0, 0];
        M(3*i + 2,:) = [a2 * u1, u2 + a2 * v1, a2, a4 * u1, v2 + a4 * v1, a4, 0, 1, 0];
    end;
    [~,~,vm] = svd(M,0); 
    N = [vm(:,7), vm(:,8), vm(:,9)];
    
    f = x*N(:,1) + y*N(:,2) + z*N(:,3);  
    F = transpose(reshape(f,3,3)); FT = transpose(F);  
    tr = sum(diag(F*Q*FT*Q)); 
    
    equ(1) = det(F); 
    equ(2:10) = expand(2*F*Q*FT*Q*F-tr*F);
    for i = 1:10
        equ(i) = maple('collect', equ(i), '[x,y,z]', 'distributed'); 
        for j = 1 : 10
            oper = maple('op', j, equ(i)); C(i,j) = maple('op', 1, oper);
        end
    end
    
    Res = maple('evalf', det(C)); %%Hidden-variable resultant
    foc = 1.0 ./ sqrt(double([solve(Res)])); 
    foc = foc(imag(foc) == 0);
end

\end{lstlisting}
\end{minipage}
\end{figure*}


{\small
\bibliographystyle{ieee}
\bibliography{egbib}
}

\end{document}